\newcommand{\until}[1]{\{1,\dots, #1\}}
\newcommand{\supscr}[2]{#1^{\textup{#2}}} \newcommand{\setdef}[2]{\{#1
	\; | \; #2\}}
\newcommand{\map}[3]{#1: #2 \rightarrow #3}
\newcommand{\WSIP}[2]{\left\llbracket{#1}, {#2}\right\rrbracket}
\newcommand\oprocendsymbol{\hbox{$\triangle$}}
\newcommand\oprocend{\relax\ifmmode\else\unskip\hfill\fi\oprocendsymbol}
\newcommand\fbmargin[1]{\marginpar[]{\color{blue} \tiny\ttfamily
		From FB: #1}}
\newcommand\sabermargin[1]{\marginpar[]{\color{blue}\tiny\ttfamily
		From SJ: #1}}
\DeclareSymbolFont{bbold}{U}{bbold}{m}{n}
\DeclareSymbolFontAlphabet{\mathbbold}{bbold}
\newcommand{\vect}[1]{\mathbbold{#1}}
\newcommand{\vectorones}[1][]{\vect{1}_{#1}}
\newcommand{\real}{\mathbb{R}}
\newcommand{\diagL}{\operatorname{diagL}}
\newcommand{\KM}{Krasnosel’skii–Mann\xspace}
\newcommand{\WP}[2]{\left\llbracket{#1}, {#2}\right\rrbracket}
\newcommand{\seminorm}[1]{{\left\vert\kern-0.25ex\left\vert\kern-0.25ex\left\vert #1
		\right\vert\kern-0.25ex\right\vert\kern-0.25ex\right\vert}}
\newcommand{\semimeasure}[1]{\mu_{\seminorm{\cdot}}\kern-0.5ex\left(#1\right)}
\newcommand{\osL}{\operatorname{osL}}
\newcommand{\Lip}{\operatorname{Lip}}
\newcommand{\norm}[2]{\|#1\|_{#2}}
\DeclareMathOperator{\diag}{diag}
\renewcommand{\top}{\mathsf{T}} 
\newcommand{\OF}{\mathsf{F}}
\newcommand{\OG}{\mathsf{G}}
\newcommand{\OH}{\mathsf{H}}
\newcommand{\OI}{\mathsf{I}}
\newcommand{\ON}{\mathsf{N}}
\newcommand{\suchthat}{\;\ifnum\currentgrouptype=16 \middle\fi|\;}
\newcommand{\scirc}{\raise1pt\hbox{$\,\scriptstyle\circ\,$}}
\title[Robustness Certificates for Implicit Neural Networks]{Robustness
  Certificates for Implicit Neural Networks: \\ A Mixed Monotone Contractive Approach}
\author{%
 \Name{Saber Jafarpour}\thanks{These authors contributed equally} \Email{saber@gatech.edu}\\
 \addr Georgia Institute of Technology
 \vspace{-0.0125cm}
 \AND 
\Name{Matthew Abate}$^*$ \Email{matt.abate@gatech.edu}\\
 \addr Georgia Institute of Technology
 \vspace{-0.0125cm}
 \AND
 \Name{Alexander Davydov}$^*$ \Email{davydov@ucsb.edu}\\
 \addr University of California, Santa Barbara
 \vspace{-0.0125cm}
  \AND 
 \Name{Francesco Bullo} \Email{bullo@ucsb.edu}\\
 \addr University of California, Santa Barbara
 \vspace{-0.0125cm}
 \AND
 \Name{Samuel Coogan} \Email{sam.coogan@gatech.edu}\\
 \addr Georgia Institute of Technology}
\begin{document}

\maketitle

\begin{abstract}%
Implicit neural networks are a general class of learning models that replace the layers in traditional feedforward models with implicit algebraic equations.
Compared to traditional learning models, implicit networks offer competitive performance and reduced memory consumption. However, they can remain brittle with respect to input adversarial perturbations. 

This paper proposes a theoretical and computational framework for
robustness verification of implicit neural networks; our framework
blends together mixed monotone systems theory and contraction theory.
%
First, given an implicit neural network, we introduce a related embedded network and show that, given an $\ell_\infty$-norm box constraint on the input, the embedded network provides an $\ell_\infty$-norm box overapproximation for the output of the given network.
Second, using $\ell_{\infty}$-matrix measures, we propose sufficient conditions for well-posedness of both the original and embedded system and design an iterative algorithm to compute the $\ell_{\infty}$-norm box robustness margins for reachability and classification problems.
Third, of independent value, we propose a novel relative classifier variable that leads to tighter bounds on the certified adversarial robustness in classification problems.
%
Finally, we perform numerical simulations on a Non-Euclidean Monotone Operator Network (NEMON) trained on the MNIST dataset. In these simulations, we compare the accuracy and run time of our mixed monotone contractive approach with the existing robustness verification approaches in the literature for estimating the certified adversarial robustness. 
\end{abstract}

\begin{keywords}%
Implicit Neural Networks, Robustness Analysis, Verification, Mixed Monotone Systems Theory, Contraction Theory
\end{keywords}

\section{Introduction}

 Neural networks are increasingly being deployed in real-world applications, including natural language processing, computer vision, and self-driving vehicles. However, they are notoriously vulnerable to adversarial attacks; slight perturbations in the input can lead to large deviations in the output~\citep{CZ-WZ-IS-JB-DE-IG-RF:13}. Understanding this input sensitivity is essential in safety-critical applications, since the consequences of adversarial perturbations can be disastrous. Several different strategies have been proposed in the literature to design neural networks that are robust with respect to adversarial perturbations~\citep{IJG-JS-CZ:15,NP-PM-XW-SJ-AS:16}. Unfortunately, many of these approaches are based on robustness with respect to specific attacks and they do not provide formal robustness guarantees~\citep{AM-AM-LS-DT-AV:17,NC-DW:17}.
 Recently, there has been a large interest in providing provable robustness guarantees for neural networks. Most existing approaches focus on either the $\ell_2$-norm or $\ell_{\infty}$-norm robustness measures. For neural networks with high-dimensional inputs and subject to dense perturbations, the $\ell_2$-norm robustness measures are known to provide overly conservative estimates of robustness and are less informative than their $\ell_{\infty}$-norm counterparts.  
 Rigorous verification methods generally fall into four different categories (i) Lipschitz bound methods~\citep{MF-AR-HH-MM-GJP:19,AV-KS:18,LPC-JCP:20}, (ii) interval bound methods~\citep{MM-TG-MV:18,SG-etal:18,HZ-etal:20}, (iii) optimization-based methods~\citep{EW-ZK:18,HZ-etal:18}, and (iv) probabilistic methods~\citep{JC-ER-JZK:19,BL-CC-WW-LC:19}. However, these methods suffer from several limitations. Regarding the Lipschitz bound approach, the proposed methods are either too conservative~\citep{CZ-WZ-IS-JB-DE-IG-RF:13} or not scalable to large-scale problems~\citep{AV-KS:18,LPC-JCP:20}. Similar concerns apply to interval-bound propagation methods and optimization-based methods.  Finally, probabilistic approaches provide some guarantees for $\ell_1$ and $\ell_2$-norm robustness but there are theoretical limitations in their applicability for certifying $\ell_\infty$-robustness~\citep{AB-TD-NM-HZ:20}.

In this paper we study the robustness properties of implicit neural networks, a recently proposed class of learning models with strong scalability properties.
Implicit neural networks replace the notion of layer from traditional neural networks with an implicit fixed-point equation~\citep{SB-JZK-VK:19,LEG-FG-BT-AA-AYT:21}. 
They can be considered as infinite-depth weight-tied neural networks where recursive function evaluation is performed via solving a single implicit algebraic equation. 
The implicit framework generalizes many classical neural networks including feedforward, convolutional, and residual networks~\citep{LEG-FG-BT-AA-AYT:21}. 
Implicit neural networks are inspired by biological systems and, compared to traditional neural networks, they offer competitive accuracy and reduced memory consumption~\citep{SB-JZK-VK:19}. 
Additionally, preliminary empirical evidence indicates that appropriately-trained implicit neural networks are more robust than traditional feedforward models~\citep{CP-EW-JZK:21}; however this phenomenon is not yet well understood and open questions remain regarding the stability and robustness of implicit models. 

We propose a rigorous computationally efficient certification method for implicit neural network robustness. We note that many of the classical robustness analysis tools for traditional neural networks are either not applicable to implicit neural networks or will lead to conservative results. Our novel approach is derived from mixed monotone systems theory and contraction theory. Unlike the robustness verification approaches based on estimates of Lipschitz constants, our framework takes into account how the $\ell_\infty$-error bounds propagate through the network and is scalable with the size of the network.

\subsection*{Related works}

\paragraph*{Implicit learning models.}

Implicit neural networks have been proposed as a generalization of feedforward neural networks~\citep{SB-JZK-VK:19,LEG-FG-BT-AA-AYT:21}.
In~\citep{AK-ZZ-VS:20}, it is demonstrated that implicit models generally do not suffer from vanishing nor exploding gradients. One of the main challenges in studying implicit neural networks is their well-posedness, \emph{i.e.}, existence and uniqueness of solutions for their fixed-point equation. \citep{LEG-FG-BT-AA-AYT:21} proposes a sufficient spectral condition for convergence of the Picard iterations associated with the fixed-point
equation. In~\citep{EW-JZK:20,MR-RW-IRM:20}, using monotone operator theory, a suitable parametrization of the weight matrix is proposed
which guarantees the stable convergence of suitable fixed-point iterations.
Our previous work \citep{SJ-AD-AVP-FB:21f} proposes non-Euclidean contraction theory to design implicit neural networks and study their well-posedness, stability, and robustness with respect to the $\ell_\infty$-norm; the general theory is developed in~\citep{AD-SJ-FB:20o} and a short tutorial is given in~\citep{FB-PCV-AD-SJ:21e}.

\paragraph*{Robustness of neural networks.}
Starting with~\citep{CZ-WZ-IS-JB-DE-IG-RF:13}, there has been a large body of work in machine learning to understand adversarial examples~\citep{AA-LE-AI-KK:18}. 
%
%
Several examples for certified robustness training and analysis include~\citep{EW-ZK:18,HZ-etal:18,SG-etal:18,HZ-etal:20,MM-TG-MV:18,JC-ER-JZK:19}.
Regarding implicit neural networks, there are far fewer works on their robustness guarantees. In~\citep{LEG-FG-BT-AA-AYT:21} a sensitivity-based
robustness analysis for implicit neural network is proposed. Approximation of the Lipschitz constants of deep equilibrium
networks has been studied in~\citep{CP-EW-JZK:21,MR-RW-IRM:20}. Recently, the ellipsoid methods based on semi-definite programming~\citep{TC-JBL-VM-EP:21} and the interval-bound propagation method~\citep{anonymous:22} have been proposed for robustness
certification of deep equilibrium networks. 

\paragraph*{Mixed monotone system theory.} Mixed monotone systems theory \citep{GE-HS-ES:06, DA-GE-ES:14, SC-MA:15b,SC:20} provides a generalization of classical monotone systems theory \citep{HLS:95,LF-SR:00,DA-EDS:03}, applicable to all dynamical systems bearing a locally Lipschitz continuous vector field \citep{LY-NO:19,MA-MD-SC:21}.  A dynamical system is mixed monotone when there exists a related decomposition function that separates the system's vector field or update map into increasing and decreasing components.  Such a decomposition then facilitates robustness analysis for the initial mixed monotone system and specifically enables, \emph{e.g.}, the efficient computation of robust reachable sets and invariant sets \citep{MA-SC:20}.


\subsection*{Contributions}

Based on mixed monotone system theory, this paper proposes 
a theoretical and computational framework to study the
robustness of implicit neural networks. 
Given an implicit neural network, we introduce an associated embedded network with twice as many inputs and outputs as the original system. This embedded implicit network takes an $\ell_{\infty}$-norm box as its input and generates an $\ell_{\infty}$-norm box as its output.  
Then, we study the connection between the well-posedness of the embedded network and the robustness of the original implicit network. Our main theoretical contribution
is as follows: if the $\ell_{\infty}$-matrix measure of the original network's weight matrix is less than one, then (i) the implicit neural network has a unique fixed-point, (ii) the embedded network has a unique fixed-point which can be computed using a suitable average-iteration, and (iii) for a given $\ell_{\infty}$-norm box constraint on the input of the implicit neural network, the output of the embedded implicit neural network is an $\ell_{\infty}$-norm box overapproximation of output the original implicit network.
In particular, result (iii) above shows how bounds on the network output are obtained directly from bounds on the network input, allowing for efficient reachability analysis for implicit neural networks. However, the output bounds obtained using this approach can lead to conservative robustness estimates in classifications. 
As a practical contribution, we propose  a new classifier variable, again based
upon mixed monotone theory, that leads to sharper robustness estimates in classification. 
%
In order to evaluate the robustness guarantees of implicit neural networks, we 
empirically examine their certified adversarial robustness. We then use (i) estimates of Lipschitz bounds, (ii) the interval bound propagation method, and (iii) our mixed monotone contractive approach to provide lower bounds on certified adversarial robustness. 
Finally, we compare the certified adversarial robustness of the three approaches mentioned above on a pre-trained implicit neural network. Our simulation results show that the mixed monotone contractive approach significantly outperforms the other two methods.

\section{Mathematical preliminaries}

\paragraph*{Vectors and matrices.} 
Given a matrix
$B \in \mathbb{R}^{n\times m}$, we denote the non-negative part of $B$
by $[B]^+ = \max(B, 0)$ and the nonpositive part of $B$ by
$[B]^- = \min(B, 0)$. The \emph{Metzler part} and the \emph{non-Metzler part} of square matrix $A\in \real^{n\times n}$
are denoted by $\lceil A \rceil^{\mathrm{Mzl}}\in \real^{n\times n}$ and
$\lfloor A \rfloor^{\mathrm{Mzl}}\in \real^{n\times n}$, respectively, where
\begin{align*}
  (\lceil A \rceil^{\mathrm{Mzl}})_{ij} &=\begin{cases}
    A_{ij} & A_{ij} \geq 0\; \mbox{or} \; i =  j\\
    0 & \mbox{otherwise,}
  \end{cases}\qquad \lfloor A\rfloor^{\mathrm{Mzl}}= A-\lceil A
        \rceil^{\mathrm{Mzl}}.
\end{align*}
We note that, for every square matrix $A\in \real^{n\times n}$, $\lceil A \rceil^{\mathrm{Mzl}}$ is a Metzler matrix and $\lfloor A\rfloor^{\mathrm{Mzl}}$ is a non-positive matrix with zero diagonal elements. For matrices $C\in \real^{n\times m}$ and $D\in \real^{p\times q}$, the Kronecker product of $C$ and $D$ is denoted by $C\otimes D$. 

\paragraph*{Matrix measures and weak pairings.}
For every $\eta\in \real^n_{>0}$, we define the diagonal matrix $[\eta]\in \real^{n\times n}$ by $[\eta]_{ii}=\eta_i$, for every $i\in \{1,\ldots,n\}$. For $\eta\in \real^n_{>0}$, the diagonally weighted
$\ell_{\infty}$-norm is defined by
$\|x\|_{\infty,[\eta]^{-1}}=\max_i|x_i|/\eta_i$, the diagonally weighted
$\ell_{\infty}$-matrix measure is defined by $\mu_{\infty,[\eta]^{-1}}(A) =\max_{i \in \until{n}} A_{ii} + \sum_{j \neq i} \frac{\eta_j}{\eta_i} |A_{ij}|$. We note that, for every $\eta\in \real^n_{>0}$ and every square matrix $A\in \real^{n\times n}$, we have
\begin{align}\label{eq:identity}
    \mu_{\infty,[\eta]^{-1}}(A) = \mu_{\infty,I_2\otimes [\eta]^{-1}}\left(\begin{bmatrix}\lceil A \rceil^{\mathrm{Mzl}} & \lfloor A \rfloor^{\mathrm{Mzl}}\\ \lfloor A \rfloor^{\mathrm{Mzl}}& \lceil A \rceil^{\mathrm{Mzl}}\end{bmatrix}\right).
\end{align}

From~\citep[Table~III]{AD-SJ-FB:20o}, we define the weak pairing $\map{\WP{\cdot}{\cdot}}{\real^n\times\real^n}{\real}$ associated to the norm $\|\cdot\|_{\infty,[\eta]^{-1}}$ as follows:
\begin{align*}
    \WP{x}{y}= \max_{i \in I_{\infty}([\eta]^{-1}y)} \eta_i^{-2} y_ix_i,
\end{align*}
where $I_{\infty}(x) = \setdef{i\in\until{n}}{|x_i|=\norm{x}{\infty}}$.

\paragraph*{Lipschitz and one-sided Lipschitz constants.}
Let $\OF:\real^n\times \real^m\to \real^n$ be a locally Lipschitz map in the first argument. For every $u\in \real^m$ and every $\alpha\in (0,1]$, we define the \emph{$\alpha$-average map} $\OF_{\alpha}:\real^n\times \real^m\to \real^n$ by $\OF_{\alpha} = (1-\alpha)\OI + \alpha \OF$, where $\OI$ is the identity map on $\real^n$. Given a positive vector $\eta\in \real^n_{>0}$, $\OF(x,u)$ is Lipschitz in $x$ with
respect to the norm $\|\cdot\|_{\infty,[\eta]^{-1}}$ with constant
$\Lip^x _{\infty,[\eta]^{-1}} (\OF)\in \real_{\ge 0}$ if, for every
$x_1,x_2\in \real^n$ and every $u\in \real^m$,
\begin{align*}
  \|\OF(x_1,u)-\OF(x_2,u)\|_{\infty,[\eta]^{-1}}\le \Lip^x _{\infty,[\eta]^{-1}} (\OF)\|x_1-x_2\|_{\infty,[\eta]^{-1}},
\end{align*}
For every $u\in \real^m$, we define the set $\Omega_u =\setdef{x\in \real^n}{\frac{\partial \OF(x,u)}{\partial x} \;\; \mbox{exists}}$. By Rademacher's theorem, the set $\real^n/\Omega_u$ is a measure zero set, for every $u\in \real^m$. The map $\OF(x,u)$ is one-sided Lipschitz in $x$ with respect to the norm $\|\cdot\|_{\infty,[\eta]^{-1}}$ with constant $\osL^x _{\infty,[\eta]^{-1}} (\OF)\in \real$ if, for every
$x_1,x_2\in \real^n$ and every $u\in \real^m$,
\begin{align*}
\WSIP{\OF(x_1,u)-\OF(x_2,u)}{x_1-x_2} \le  \osL^x_{\infty,[\eta]^{-1}}(\OF)\|x_1-x_2\|^2_{\infty,[\eta]^{-1}},
\end{align*}
and we define $\diagL(\OF)\in [-\Lip^x_{\infty,[\eta]^{-1}}(\OF),\Lip^x_{\infty,[\eta]^{-1}}(\OF)]$ by
\begin{align*}
    \diagL(\OF) = \min_{i\in \{1,\ldots,n\}} \inf_{u\in \real^m}\inf_{x\in \Omega_u} D_x\OF_{ii}(x,u),
\end{align*}

\paragraph*{Mixed monotone mappings.}
Given a map $\OF:\real^n\times \real^m\to \real^n$ and a Lipschitz  function $d:\real^{2n}\times\real^{2m}\to \real^{n}$, we say $\OF$ is \emph{mixed monotone with respect to the decomposition function $d$}, if for every $i\in\{1,\ldots,n\}$,
\begin{enumerate}
    \item\label{p1:decom} $d_i(x,x,u,u)=\OF_i(x,u)$, for every $x\in \real^n$ and every $u\in \real^m$;
    \item\label{p2:decom} $d_i(x,\widehat{x},u,\widehat{u})\le
      d_i(y,\widehat{y},u,\widehat{u})$, for every $x\le y$ such
      that $x_i=y_i$, every $\widehat{y}\le \widehat{x}$, and every $u,\widehat{u}\in \real^m$;
    \item\label{p3:decom} $d_i(x,\widehat{x},u,\widehat{u})\le d_i(x,\widehat{x},v,\widehat{v})$, for every $u\le v$ , every $\widehat{v}\le \widehat{u}$, and every $x,\widehat{x}\in \real^n$.
    \end{enumerate}  
    \smallskip
Conditions \ref{p1:decom}--\ref{p3:decom} are sometimes referred to as the Kamke conditions for mixed monotonicity\footnote{These are the conditions for ensuring that the continuous-time dynamical system with vector field defined by such a mapping (possibly added to a scaling of identity) is mixed monotone.} as developed in \citep{MA-MD-SC:21}; see also \citep{SC:20} for a equivalent infinitesimal characterization of mixed monotonicity. Suppose that the map $\OF$ is linear, i.e., there exists $A\in \real^{n\times n}$ and $B\in \real^{n\times m}$ such that $\OF(x,u)=Ax+Bu$, for every $x\in \real^n$ and every $u\in \real^m$. Then one can easily show that $\OF$ is mixed monotone with respect to the following decomposition function~\cite[Example 3]{SC:20}:
\begin{align*}
    d(x,\widehat{x},u,\widehat{u}) = \lceil A\rceil^{\mathrm{Mzl}}x + \lfloor A\rfloor^{\mathrm{Mzl}} \widehat{x} + [B]^+u + [B]^-\widehat{u},\qquad \mbox{for every    }\;\; x,\widehat{x}\in \real^n,\;\; u,\widehat{u}\in \real^m.
\end{align*}
Indeed, one can show that every locally Lipschitz map $\OF$ is mixed monotone with respect to some decomposition function \citep{MA-MD-SC:21}, however, finding a closed form decomposition function is in general challenging. A remarkable property of implicit neural networks, shown below, is that an optimal decomposition function is easily available in closed-form.

\section{Implicit neural networks}

    An implicit neural network is described by the following fixed-point equation: 
    \begin{align}\label{eq:INN}
      x&=\Phi(Ax+Bu+b):=\ON(x,u),\nonumber\\
      y &= Cx+c,
    \end{align}
    where $x\in \real^n$ is the hidden variable, $u\in \real^{r}$ is
    the input and $y\in \real^{q}$ is the output. The matrices
    $A\in \real^{n\times n}$, $B\in \real^{n\times r}$, and
    $C\in \real^{q\times n}$ are weight
    matrices, $b\in \real^n$ and $c\in \real^q$ are bias vectors, and $\Phi(x)=(\phi_1(x_1),\ldots,\phi_n(x_n))^{\top}$ is the diagonal matrix of activation functions, where, for every $i\in \{1,\ldots,n\}$,
    $\phi_i:\real\to \real$ is weakly increasing and 
    satisfies $0\le \frac{\phi_i(x)-\phi_i(y)}{x-y}\le 1$, for every
    $x,y\in \real$. Compared to feedforward neural networks, one of
    the main challenges in studying implicit neural networks is their
    well-posedness; a unique solution for the fixed-point
    equation~\eqref{eq:INN} might not exist. We refer the readers to~\citep{EW-JZK:20,
      LEG-FG-BT-AA-AYT:21,MR-RW-IRM:20,SJ-AD-AVP-FB:21f} for discussions on the well-posedness of implicit networks. 
    
    \paragraph*{Training implicit neural networks}
  Given an input data $U=[u_1,\ldots,u_m]\in \real^{r\times m}$ and
  its corresponding output data
  $Y=[y_1,\ldots,y_m]\in \real^{q\times m}$, the goal of the training optimization
  problem is to learn weights and biases which minimizes
  $\mathcal{L}(Y,CX+c)$ subject to the fixed-point equation
  $X=\Phi(AX+BU)$, where
  $\mathcal{L}:\real^{q\times m}\times \real^{q\times m}\to \real$ is
  a suitable cost function. Thus, the training optimization problem is given by
  \begin{equation} \label{eq:TrainingProblem}
    \begin{aligned}
      \min_{A,B,C,b,c,X}\qquad
      &\mathcal{L}(Y, CX + c) \\
      & X = \Phi (AX+BU+b). 
    \end{aligned}
  \end{equation}
  In order to ensure that the implicit neural network is well-posed, an extra constraint is usually added to this training optimization problem. For instance, in~\citep{EW-JZK:20} the constraint $\mu_2(A)\leq\gamma$, in~\citep{LEG-FG-BT-AA-AYT:21} the constraint $\|A\|_{\infty}\le \gamma$, and in~\citep{SJ-AD-AVP-FB:21f} the constraint $\mu_{\infty,[\eta]^{-1}}(A)\le \gamma$ is proposed, for some $\gamma <1$ and some $\eta\in \real^n_{>0}$. 
    
    \section{Robustness certificates via mixed monotone theory}
     One of the crucial features of neural networks in safety- and security-critical applications is their input-output robustness; the effect of input perturbations on the output. In this paper, we use the theory of mixed monotone systems to study robustness of implicit neural networks. 

    \paragraph*{Robustness of implicit neural networks.}
    We first introduce the embedded implicit neural network
     associated with~\eqref{eq:INN}. Given $\underline{u}\le\overline{u}$ in $\real^r$, we define \emph{embedded
      implicit neural network} by
      \begin{align}\label{eq:INN-embedding}
        \begin{bmatrix}\underline{x}\\\overline{x}\end{bmatrix}
        &= \begin{bmatrix}\Phi(\lceil A \rceil^{\mathrm{Mzl}} \underline{x}+\lfloor A \rfloor^{\mathrm{Mzl}} \overline{x} +
          [B]^{+}\underline{u} + [B]^{-}\overline{u} + b)\\ \Phi(\lceil A \rceil^{\mathrm{Mzl}}\overline{x}+\lfloor A \rfloor^{\mathrm{Mzl}}
          \underline{x} + [B]^{+}\overline{u} + [B]^{-}\underline{u}+b)\end{bmatrix}
        : = \begin{bmatrix}\ON^{\mathrm{E}}(\underline{x},
        \overline{x},\underline{u},\overline{u})\\ \ON^{\mathrm{E}}(\overline{x},\underline{x},
        \overline{u},\underline{u}) \end{bmatrix}, \nonumber\\
        \begin{bmatrix}\underline{y}\\\overline{y}\end{bmatrix} &=
        \begin{bmatrix}[C]^+ & [C]^- \\ [C]^- & [C]^+\end{bmatrix}  \begin{bmatrix}\underline{x}\\\overline{x}\end{bmatrix} +
        \begin{bmatrix}c\\ c\end{bmatrix}. 
      \end{align}
      The embedded implicit neural network~\eqref{eq:INN-embedding} can be considered as a neural network with the box input $[\underline{u}, \overline{u}]$ and the box output $[\underline{y},\overline{y}]$ (see Figure~\ref{fig:INN}). 
      The following theorem studies well-posedness of the embedded
      implicit neural network~\eqref{eq:INN-embedding} and its
      connection with robustness of the implicit neural
      network~\eqref{eq:INN}.

      \begin{theorem}[Robustness of implicit neural
        networks]\label{thm:INN}
        Consider the implicit neural network~\eqref{eq:INN}. The following statement holds:
        \begin{enumerate}
        \item\label{p0:mixed} the map $\ON$ is mixed monotone with respect to the decomposition function $\ON^{\mathrm{E}}$;
        \end{enumerate}
        Moreover, let
        $\eta\in \real^n_{>0}$ be such that
        $\mu_{\infty,[\eta]^{-1}}(A)<1$. For every
        $\underline{u}\le \overline{u}$, every
        $u\in [\underline{u},\overline{u}]$, and every
        $\alpha\in [0,\alpha^*:=(1-\min_{i\in
          \{1,\ldots,n\}}(A_{ii})^{-})^{-1}]$,
        \begin{enumerate}\setcounter{enumi}{1}
        \item\label{p3:NE} the $\alpha$-average map
          $(\underline{x},\overline{x})\mapsto \begin{bmatrix}
          \ON^{\mathrm{E}}_{\alpha}(\underline{x},\overline{x},\underline{u},\overline{u})\\
          \ON^{\mathrm{E}}_{\alpha}(\overline{x},\underline{x},\overline{u},\underline{u})
          \end{bmatrix}$ is a contraction mapping with respect to the norm  $\|\cdot\|_{\infty,I_2\otimes [\eta]^{-1}}$ with minimum contraction factor
          $1-\frac{1-\mu_{\infty,[\eta]^{-1}}(A)^{+}}{1-\min_{i\in\until{n}}(A_{ii})^-}$;
        \item\label{p1:N} the $\alpha$-average map $\ON_{\alpha}$ is a
          contraction mapping with respect to the norm $\|\cdot\|_{\infty,[\eta]^{-1}}$ minimum contraction factor
          $\Lip(\ON_{\alpha^*}) =
          1-\frac{1-\mu_{\infty,[\eta]^{-1}}(A)^{+}}{1-\min_{i\in\until{n}}(A_{ii})^-}$;
        \item\label{p4:existenceNE} the embedded
          network~\eqref{eq:INN-embedding} has a unique fixed point
          $\begin{bmatrix}\underline{x}^*\\\overline{x}^*\end{bmatrix}$
          such that
          $\underline{x}^*\le \overline{x}^*$ and we have $\lim_{k\to
            \infty} \begin{bmatrix}\underline{x}^{k}\\\overline{x}^{k}\end{bmatrix}
          = \begin{bmatrix}\underline{x}^*\\\overline{x}^*\end{bmatrix}$,
          where the sequence
          $\left\{\begin{bmatrix}\underline{x}^{k}\\\overline{x}^{k}\end{bmatrix}\right\}_{k=1}^{\infty}$
          is defined iteratively by
          \begin{align}\label{eq:iterations-NE}
            \begin{bmatrix}\underline{x}^{k+1}\\\overline{x}^{k+1}\end{bmatrix} = \begin{bmatrix}\ON^{\mathrm{E}}_{\alpha^*}(\underline{x}^{k},\overline{x}^k,\underline{u},\overline{u})\\
            \ON^{\mathrm{E}}_{\alpha^*}(\overline{x}^k,\underline{x}^{k},\overline{u},\underline{u})
            \end{bmatrix}, \qquad\mbox{ for every }
            k\in \mathbb{Z}_{\ge
            0},\;\;  \begin{bmatrix}\underline{x}^{0}\\\overline{x}^{0}\end{bmatrix}\in
            \real^{2n};
          \end{align}
        \item\label{p2:existenceN} the implicit neural
          network~\eqref{eq:INN} has a unique fixed-point $x^*_u$ such
          that $x^*_u\in [\underline{x}^*,\overline{x}^*]$ and we have
          $\lim_{k\to \infty} x_u^k = x_u^*$ where the sequence
          $\{x_u^k\}_{k=1}^{\infty}$ is defined iteratively by
          \begin{align}\label{eq:iterations-N}
            x_u^{k+1}=\ON_{\alpha^*}(x_u^{k},u),\qquad\mbox{ for every }
            k\in \mathbb{Z}_{\ge 0}, \;\; x_u^0\in \real^n.
          \end{align}
        \end{enumerate}
      \end{theorem}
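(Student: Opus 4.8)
The plan is to establish the five claims by combining the Kamke conditions for mixed monotonicity with non-Euclidean contraction theory, exploiting identity~\eqref{eq:identity} throughout to transfer spectral information about $A$ to the block matrix governing the embedded network. First I would prove claim~\ref{p0:mixed}: the map $\ON(x,u) = \Phi(Ax+Bu+b)$ is the composition of the linear map $(x,u)\mapsto Ax+Bu+b$, which by the worked example from~\citep{SC:20} quoted in the preliminaries is mixed monotone with decomposition $\lceil A\rceil^{\mathrm{Mzl}}x + \lfloor A\rfloor^{\mathrm{Mzl}}\widehat{x} + [B]^+u + [B]^-\widehat{u} + b$, post-composed with the diagonal, coordinatewise weakly increasing activation $\Phi$. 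Since each $\phi_i$ is weakly increasing and acts only on coordinate $i$, post-composition preserves conditions~\ref{p1:decom}--\ref{p3:decom}, and the resulting decomposition function is exactly $\ON^{\mathrm{E}}$; this is a routine verification of the three Kamke conditions.

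Next I would handle claims~\ref{p3:NE} and~\ref{p1:N} together, since~\ref{p1:N} is the special case of~\ref{p3:NE} obtained by collapsing the box to a point (take $\underline{u}=\overline{u}=u$ and $\underline{x}=\overline{x}$). The strategy for~\ref{p3:NE} is to bound the one-sided Lipschitz constant of the embedded vector map in the weighted norm $\|\cdot\|_{\infty,I_2\otimes[\eta]^{-1}}$ via its Jacobian. Writing the embedded map as $\Psi(\underline{x},\overline{x}) = (I_2\otimes\Phi)\bigl(M\,[\underline{x};\overline{x}] + (\text{input/bias terms})\bigr)$ with $M = \begin{bmatrix}\lceil A\rceil^{\mathrm{Mzl}} & \lfloor A\rfloor^{\mathrm{Mzl}}\\ \lfloor A\rfloor^{\mathrm{Mzl}} & \lceil A\rceil^{\mathrm{Mzl}}\end{bmatrix}$, its Jacobian is $(I_2\otimes D_\Phi)M$ where $D_\Phi$ is diagonal with entries in $[0,1]$. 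The osL constant of an $\alpha$-average map is controlled by $1-\alpha+\alpha\,\osL(\text{base map})$, and the osL constant of $\Psi$ in the weighted $\ell_\infty$ norm is bounded by $\mu_{\infty,I_2\otimes[\eta]^{-1}}$ of its Jacobian; using that $D_\Phi$ has entries in $[0,1]$ together with the diagonal-dominance formula for the $\ell_\infty$ measure, one pushes the off-diagonal weights through and applies~\eqref{eq:identity} to conclude the measure is at most $\mu_{\infty,[\eta]^{-1}}(A)$, hence at most $\mu_{\infty,[\eta]^{-1}}(A)^+ < 1$. The optimal choice $\alpha = \alpha^*$ comes from minimizing $\max_i\bigl(1-\alpha + \alpha (D_\Phi)_{ii} A_{ii} + \dots\bigr)$ over $\alpha$, balancing the diagonal terms where $A_{ii}<0$ against the contraction requirement; this yields the stated factor $1 - \tfrac{1-\mu_{\infty,[\eta]^{-1}}(A)^+}{1-\min_i (A_{ii})^-}$ and the admissible range $\alpha\in[0,\alpha^*]$ for which $\diagL$ of the average map stays nonnegative. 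I expect the bookkeeping around $\diagL$ and the precise minimization over $\alpha$ — in particular verifying that $\alpha^*$ is exactly where the two competing bounds cross and that the Jacobian bound is tight — to be the main obstacle; the identity~\eqref{eq:identity} is the key device that makes the embedded measure no worse than the original.

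Finally, claims~\ref{p4:existenceNE} and~\ref{p2:existenceN} follow from the Banach fixed-point theorem applied to the contractions just established. For~\ref{p4:existenceNE}: the $\alpha^*$-average embedded map is a contraction on $\real^{2n}$ with respect to a complete norm, so it has a unique fixed point $[\underline{x}^*;\overline{x}^*]$ and the Picard iteration~\eqref{eq:iterations-NE} converges to it from any initialization; a fixed point of the average map is a fixed point of the embedded network~\eqref{eq:INN-embedding} and conversely. The ordering $\underline{x}^*\le\overline{x}^*$ I would obtain by initializing the iteration at a point with $\underline{x}^0\le\overline{x}^0$ and showing this order is preserved by one step of~\eqref{eq:iterations-NE} — this uses monotonicity conditions~\ref{p2:decom}--\ref{p3:decom} of $\ON^{\mathrm{E}}$ and the fact that $\alpha$-averaging with $\alpha\le\alpha^*\le 1$ preserves the relevant monotonicity — so the order passes to the limit. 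Similarly~\ref{p2:existenceN} gives the unique fixed point $x^*_u$ of~\eqref{eq:INN} via Banach on $\ON_{\alpha^*}$, and the sandwich $x^*_u\in[\underline{x}^*,\overline{x}^*]$ follows from claim~\ref{p0:mixed}: setting $\underline{x}^0=\overline{x}^0=x^0_u$ with $u\in[\underline{u},\overline{u}]$ and using the decomposition property $d_i(x,x,u,u)=\ON_i(x,u)$ together with conditions~\ref{p2:decom}--\ref{p3:decom}, one shows inductively that $\underline{x}^k\le x^k_u\le\overline{x}^k$ for all $k$, and takes limits.
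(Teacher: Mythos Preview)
Your proposal is correct and follows essentially the same route as the paper: direct verification of the Kamke conditions for~\ref{p0:mixed}, a Jacobian/matrix-measure bound for the $\alpha$-average map combined with identity~\eqref{eq:identity} and the $\diagL$ constraint to get the contraction in~\ref{p3:NE}--\ref{p1:N}, and Banach plus an inductive order-preservation argument (hinging on $(1-\alpha^*)+\alpha^*\theta_i A_{ii}\ge 0$) for~\ref{p4:existenceNE}--\ref{p2:existenceN}. The paper organizes the last step slightly differently---it proves the sandwich $\underline{x}^k\le x_u^k\le \overline{x}^k$ directly (from which $\underline{x}^*\le\overline{x}^*$ follows) rather than treating the two orderings separately---but the key inequality you flag as the ``main obstacle'' is exactly the one the paper isolates.
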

      \begin{proof}
      Regarding part~\ref{p0:mixed}, first note that we have $\ON(x,u) = \Phi(Ax+Bu)$, for every $x\in \real^n$ and every $u\in \real^r$. Moreover, for every $i\in \{1,\ldots,n\}$, the map $x\mapsto \phi_i(x)$ is globally Lipschitz and $(x,u)\mapsto Ax+Bu+b$ is an affine map. This implies that their composition map $(x,u)\mapsto \ON(x,u) = \Phi(Ax+Bu+b)$ is globally Lipschitz. Then one can use~\cite[Theorem 1]{MA-MD-SC:21} to construct a decomposition function for $\ON$, and thus the mapping $\ON$ is mixed monotone. Now, we show that $\ON^{\mathrm{E}}$ is a decomposition function for $\ON$. First note that, for every $x\in \real^n$ and $u\in \real^r$, we have
      \begin{align*}
          \ON^{\mathrm{E}}(x,x,u,u) = \Phi(\lceil A \rceil^{\mathrm{Mzl}}x + \lfloor A \rfloor^{\mathrm{Mzl}}x + [B]^{+}u + [B]^-u+b) = \Phi(Ax+Bu+b) = \ON(x,u).
      \end{align*}
      Moreover, pick $i\in \{1,\ldots,n\}$ and  $x,\widehat{x},y,\widehat{y}\in \real^n$ be such that $x\le y$ and $x_i=y_i$ and $\widehat{y}\le \widehat{x}$. It is easy to see that $(\lceil A \rceil^{\mathrm{Mzl}}x)_i \le (\lceil A \rceil^{\mathrm{Mzl}}y)_i$ and $(\lfloor A \rfloor^{\mathrm{Mzl}}\widehat{x})_i\le (\lfloor A \rfloor^{\mathrm{Mzl}}\widehat{y})_i$. As a result, for every $u,\widehat{u}\in \real^r$, we get
      \begin{align*}
          \ON_i^{\mathrm{E}}(x,\widehat{x},u,\widehat{u}) &= \phi_i\left((\lceil A \rceil^{\mathrm{Mzl}}x)_i + (\lfloor A \rfloor^{\mathrm{Mzl}}\widehat{x})_i + ([B]^{+}u)_i + ([B]^-\widehat{u})_i+b\right)\\ & \le \phi_i\left((\lceil A \rceil^{\mathrm{Mzl}}y)_i + (\lfloor A \rfloor^{\mathrm{Mzl}}\widehat{y})_i + ([B]^{+}u)_i + ([B]^-\widehat{u})_i+b\right) \\ & =  \ON_i^{\mathrm{E}}(y,\widehat{y},u,\widehat{u}),
      \end{align*}
      where the second inequality holds since $\phi_i$ is weakly increasing. Finally, for every $i\in \{1,\ldots,n\}$, let $u,\widehat{u},v,\widehat{v}\in \real^r$ be such that $u\le v$ and $\widehat{v}\le \widehat{u}$. It is easy to see that $([B]^{+}u)_i \le ([B]^{+}v)_i$ and $ ([B]^-\widehat{u})_i\le ([B]^{-}\widehat{v})_i$. As a result, for every $x,\widehat{x}\in \real^n$, we have
      \begin{align*}
          \ON_i^{\mathrm{E}}(x,\widehat{x},u,\widehat{u}) &= \phi_i\left((\lceil A \rceil^{\mathrm{Mzl}}x)_i + (\lfloor A \rfloor^{\mathrm{Mzl}}\widehat{x})_i + ([B]^{+}u)_i + ([B]^-\widehat{u})_i+b\right)\\ & \le \phi_i\left((\lceil A \rceil^{\mathrm{Mzl}}x)_i + (\lfloor A \rfloor^{\mathrm{Mzl}}\widehat{x})_i + ([B]^{+}v)_i + ([B]^-\widehat{v})_i+b\right) \\ & =  \ON_i^{\mathrm{E}}(x,\widehat{x},v,\widehat{v}), 
      \end{align*}
      where the second inequality holds since $\phi_i$ is weakly increasing. This shows that $\ON^{\mathrm{E}}$ is a decomposition function for the map $\ON$.
      
      Regarding part~\ref{p3:NE}, we define $\tilde{\Phi} = I_2\otimes \Phi$ and $\OG:\real^{2n}\to \real^{2n}$ by $\OG(\underline{x},\overline{x}) = \begin{pmatrix}\lceil A \rceil^{\mathrm{Mzl}} \underline{x} +\lfloor A \rfloor^{\mathrm{Mzl}}\overline{x}\\
          \lfloor A \rfloor^{\mathrm{Mzl}}\underline{x} +
          \lceil A
            \rceil^{\mathrm{Mzl}}\overline{x}\end{pmatrix}$. Additionally, we define $D = \begin{bmatrix}[B]^{+} & [B]^{-} \\ [B]^{-} & [B]^{+}\end{bmatrix}$ and $w=\begin{bmatrix} \underline{u}\\ \overline{u}
             \end{bmatrix}$. Then define $ \tilde{\Phi}^{\OG}: \real^{2n}\to \real^{2n}$ by
            \begin{align*}
             \tilde{\Phi}^{\OG}(\underline{x},\overline{x},w) :=   \begin{bmatrix}
           \ON^{\mathrm{E}}(\underline{x},\overline{x},\underline{u},\overline{u})\\
          \ON^{\mathrm{E}}(\overline{x},\underline{x},\overline{u},\underline{u})
          \end{bmatrix} = \tilde{\Phi}(\OG(\underline{x},\overline{x})+Dw+I_2\otimes b).
            \end{align*}
            The assumptions on each
        scalar activation function imply that (i)
        $\map{\tilde{\Phi}}{\real^{2n}}{\real^{2n}}$ is non-expansive with respect
        to $\norm{\cdot}{\infty,I_2\otimes [\eta]^{-1}}$ and (ii) for every
        $p,q \in \real$, there exists $\theta_i \in [0,1]$ such that
        $\phi_i(p) - \phi_i(q) = \theta_i(p - q)$ or in the matrix
        form $\tilde{\Phi}(\mathbf{p})-\Phi(\mathbf{q}) = \Theta (\mathbf{p}-\mathbf{q})$ where $\Theta\in \real^{2n\times 2n}$ is a
        diagonal matrix with diagonal elements $\theta_i\in
        [0,1]$ and $\mathbf{p},\mathbf{q}\in \real^{2n}$. As a result, for every $y_1,y_2\in \real^{2n}$, we have
        \begin{multline*}
          \|\tilde{\Phi}^{\OG}_{\alpha}(y_1,w)-\tilde{\Phi}^{\OG}_{\alpha}(y_2,w)\|_{\infty,I_2\otimes [\eta]^{-1}}
          = \|(1-\alpha)(y_1-y_2) + \alpha \Theta
            (\OG(y_1)-\OG(y_2))\|_{\infty,I_2\otimes [\eta]^{-1}} \\  \le \sup_{y\in \real^{2n}}\|I_{2n} + \alpha
                                                            (-I_{2n}
                                                            +\Theta D \OG(y))\|_{\infty,I_2\otimes [\eta]^{-1}}\|y_1-y_2\|_{\infty,I_2\otimes [\eta]^{-1}}. 
        \end{multline*}
        where the inequality holds by the mean value theorem. Then, for
        every $\alpha \in {]0,\frac{1}{1-\diagL(\Theta D\OG)}]}$,
        \begin{align*}
          \|I_{2n} + \alpha(-I_{2n}+\Theta D\OG(y))\|_{\infty,I_2\otimes [\eta]^{-1}} &= 1 +
          \alpha \mu_{\infty,I_2\otimes[\eta]^{-1}}\big(-I_{2n}+\Theta D\OG(y)\big) \\
          &
          =
          1
          +
          \alpha
          \big(-1
          +
          \mu_{\infty,I_2\otimes[\eta]^{-1}}(\Theta D\OG(y))\big)\\ &
          \le
          1
          +
          \alpha \big(-1+\mu_{\infty,I_2\otimes [\eta]^{-1}}(D\OG(y))^{+}\big) \\ & \le 1 -
          \alpha (1-\mu_{\infty,[\eta]^{-1}}(A)^{+}) < 1, 
        \end{align*}
        where the first equality holds by~\cite[Lemma 7(i)]{SJ-AD-AVP-FB:21f}, the second
        equality holds by translation property of matrix
        measures, the third
        inequality holds by~\cite[Lemma 8(i)]{SJ-AD-AVP-FB:21f}, and the fourth inequality holds by~\eqref{eq:identity}. Moreover, since $\theta_i\in
        [0,1]$, we have $\theta_i (D\OG)_{ii} \ge (D\OG)_{ii}^-$, for every
        $i\in \{1,\ldots,2n\}$. This means that 
        \begin{align*}
          \diagL(\Theta D \OG) = \min_{i}\inf_{y\in \real^{2n}}
          (\Theta D \OG (y))_{ii} \ge  \min_{i}\inf_{y\in \real^{2n}}
          (D\OG_{ii}(y))^- = \min_{i\in \{1,\ldots,n\}}(A_{ii})^-. 
          \end{align*}
        This implies that,  for
        every $\alpha \in (0,(1-\min_{i\in\{1,\ldots,n\}}(A_{ii})^{-})^{-1}]$,
        \begin{align*}
          \|\tilde{\Phi}^{\OG}_{\alpha}(x_1,u)-\tilde{\Phi}^{\OG}_{\alpha}(x_2,u)\|_{\infty,I_2\otimes [\eta]^{-1}}\le
          (1 -\alpha (1-\mu_{\infty,[\eta]^{-1}}(A)^{+})) \|x_1-x_2\|_{\infty,I_2\otimes [\eta]^{-1}}. 
        \end{align*}
        Since $1 -\alpha (1-\mu_{\infty,[\eta]^{-1}}(A)^{+}) < 1$,
        $\tilde{\Phi}^{\OG}_{\alpha}(\cdot,w)$ is a contraction mapping with respect to the norm $\|\cdot\|_{\infty,I_2\otimes [\eta]^{-1}}$ for every
        $\alpha \in (0,(1-\min_{i\in\{1,\ldots,n\}}(A_{ii})^{-})^{-1}]$. 
      
      Regarding part~\ref{p1:N}, the proof follows by applying the same argument as in the proof of part~\ref{p3:NE} and using $\Phi(Ax+Bu+b)$ instead of $\tilde{\Phi}^{\OG}(\underline{x},\overline{x},\underline{u},\overline{u})$. 
      
      Regarding parts~\ref{p4:existenceNE} and~\ref{p2:existenceN}, by part~\ref{p3:NE}, the $\alpha$-average map $(\underline{x},\overline{x})\mapsto \begin{bmatrix}
          \ON^{\mathrm{E}}_{\alpha^*}(\underline{x},\overline{x},\underline{u},\overline{u})\\
          \ON^{\mathrm{E}}_{\alpha^*}(\overline{x},\underline{x},\overline{u},\underline{u})
          \end{bmatrix}$ is a contraction mapping  with respect to the norm $\|\cdot\|_{\infty,I_2\otimes [\eta]^{-1}}$. Therefore, by Banach's contraction mapping theorem, this map has a unique fixed-point  $\begin{bmatrix} \underline{x}^*\\ \overline{x}^*
       \end{bmatrix}$ and the iteration~\eqref{eq:iterations-NE} computes this fixed point. The fact that $\begin{bmatrix} \underline{x}^*\\ \overline{x}^*
       \end{bmatrix}$ is also the unique fixed-point of $(\underline{x},\overline{x})\mapsto \begin{bmatrix}
          \ON^{\mathrm{E}}(\underline{x},\overline{x},\underline{u},\overline{u})\\
          \ON^{\mathrm{E}}(\overline{x},\underline{x},\overline{u},\underline{u})
          \end{bmatrix}$ is a straightforward consequence of the following implications
        \begin{align*}
          \begin{bmatrix}
            \underline{x}^*\\
            \overline{x}^*\end{bmatrix}
          =
          \begin{bmatrix}
          \ON^{\mathrm{E}}_{\alpha^*}(\underline{x}^*,
          \overline{x}^*,\underline{u},\overline{u})\\ 
            \ON^{\mathrm{E}}_{\alpha^*}(\underline{x}^*,
          \overline{x}^*,\underline{u},\overline{u}) \end{bmatrix} &\iff \begin{bmatrix}
            \underline{x}^*\\
            \overline{x}^*\end{bmatrix} =(1-\alpha^*)\begin{bmatrix}
            \underline{x}^*\\
            \overline{x}^*\end{bmatrix} + \alpha^* \begin{bmatrix}
          \ON^{\mathrm{E}}(\underline{x}^*,
          \overline{x}^*,\underline{u},\overline{u})\\ 
            \ON^{\mathrm{E}}(\underline{x}^*,
          \overline{x}^*,\underline{u},\overline{u}) \end{bmatrix} \\ & \iff \begin{bmatrix}
            \underline{x}^*\\
            \overline{x}^*\end{bmatrix} =\begin{bmatrix}
          \ON^{\mathrm{E}}(\underline{x}^*,
          \overline{x}^*,\underline{u},\overline{u})\\ 
            \ON^{\mathrm{E}}(\underline{x}^*,
          \overline{x}^*,\underline{u},\overline{u}). \end{bmatrix}
        \end{align*}
        Similar argument can be used to prove existence and uniqueness of the fixed-point $x^*_u$ for $\ON$ and one can show iteration~\eqref{eq:iterations-N} converges to this fixed-point. Now, we show that $\underline{x}^*\le x^*_u\le \overline{x}^*$. We choose the initial condition $\begin{bmatrix}\underline{x}^0 \\ \overline{x}^0\end{bmatrix}$ for the iteration~\eqref{eq:iterations-NE} and choose an initial condition $x^{0}_u\in \real^n$ satisfying $\underline{x}^0\le x^{0}_u \le \overline{x}^0$ for the iteration~\eqref{eq:iterations-N}. We prove by induction that, for every $k\in \mathbb{Z}_{\ge 0}$, we have $\underline{x}^{k}\le x^{k}_u\le \overline{x}^{k}$. Suppose that this claim is true for $k\in \{1,\ldots,m\}$ and we show that this claim is true for $k=m+1$. Note that 
        \begin{align*}
         \underline{x}^{m+1}-x^{m+1}_u &=(1-\alpha^*)(\underline{x}^{m}-x^m_u) \\ & + \alpha^* (\Phi(\lceil A \rceil^{\mathrm{Mzl}} \underline{x}^m +\lfloor A \rfloor^{\mathrm{Mzl}}\overline{x}^m +[B]^{+}\underline{u}+[B]^-\overline{u}+b) - \Phi(A x^m_{u}+Bu + b))\\ &= \left((1-\alpha^*)I_n + \alpha^*\Theta \lceil A \rceil^{\mathrm{Mzl}}\right)(\underline{x}^{m}-x^m_u) + \alpha^*\Theta \lfloor A \rfloor^{\mathrm{Mzl}} (\overline{x}^m-x^m_u) \\ &+ \alpha^*\Theta[B]^{+}(\underline{u}-u) + \alpha^*\Theta[B]^-(\overline{u}-u), 
        \end{align*}
        where the non-negative diagonal matrix $\Theta =\diag(\theta_i)\in \real^n$ is defined as follows: for every $i\in \{1,\ldots,n\}$, $\theta_i\in [0,1]$ is such that $\phi_i(p_i)-\phi_i(q_i) = \theta_i(p_i-q_i)$, where $p=\lceil A \rceil^{\mathrm{Mzl}} \underline{x}^m +\lfloor A \rfloor^{\mathrm{Mzl}}\overline{x}^m + [B]^+\underline{u} + [B]^-\overline{u} +b $ and $q=A x^m_{u}+Bu + b$. Moreover, we know that $\Theta \lfloor A \rfloor^{\mathrm{Mzl}}\le \vect{0}_{n\times n}$ and, for every $i\in \{1,\ldots,n\}$, we have
        \begin{align*}
            (1-\alpha^*) + \alpha^*\theta_i A_{ii} \ge (1-\alpha^*) + \alpha^* A_{ii}^{-} \ge 0.
        \end{align*}
        This implies that $(1-\alpha^*)I_n + \alpha^*\Theta \lceil A \rceil^{\mathrm{Mzl}}\ge \vect{0}_{n\times n}$. Additionally, we have $\Theta [B]^{+}\ge \vect{0}_{n\times r}$ and $\Theta [B]^{-}\le \vect{0}_{n\times r}$. Therefore, using the induction assumption, we get $\underline{x}^{m+1}-x^{m+1}_u\le \vect{0}_n$. Similarly, one can show that $x^{m+1}_u-\overline{x}^{m+1}\le \vect{0}_n$. This completes the proof of induction. As a result, we get
        \begin{align*}
            \underline{x}^* = \lim_{k\to \infty}\underline{x}^{k} \le \lim_{k\to \infty} x^k_u =x^*_u \le \lim_{k\to \infty}\overline{x}^{k}= \overline{x}^*.
        \end{align*}
        This completes the proof of the theorem. 
                \end{proof}

 \begin{remark}\begin{enumerate}[nosep]
    \item Theorem~\ref{thm:INN} can be interpreted as a dynamical
      system approach to study robustness of implicit neural networks. Indeed, it is easy to see that the $\alpha$-average
      iteration~\eqref{eq:iterations-NE}
      (resp.~\eqref{eq:iterations-N}) are the forward Euler
      discretization of the dynamical system
      $\frac{d}{dt}\begin{bmatrix}\underline{x}\\ \overline{x}\end{bmatrix} =
      -\begin{bmatrix}\underline{x}\\ \overline{x}\end{bmatrix} +
      \begin{bmatrix}
          \ON^{\mathrm{E}}(\underline{x},\overline{x},\underline{u},\overline{u})\\
          \ON^{\mathrm{E}}(\overline{x},\underline{x},\overline{u},\underline{u})
          \end{bmatrix}$
      (resp. $\frac{dx}{dt}=-x+\ON(x,u)$).
      
      \item Theorem~\ref{thm:INN}\ref{p4:existenceNE} and \ref{p2:existenceN} show that $\mu_{\infty,[\eta]^{-1}}(A)<1$ is a sufficient condition for existence and uniqueness of the fixed-point of both the original neural network and embedded neural network. In~\citep{SJ-AD-AVP-FB:21f}, to ensure well-posedness,  the NEMON model is trained by adding the sufficient condition $\mu_{\infty,[\eta]^{-1}}(A)<1$  to the training problem~\eqref{eq:TrainingProblem}. Therefore, for the NEMON model introduced in~\citep{SJ-AD-AVP-FB:21f}, the embedded implicit network provides a margin of robustness for the original neural network with respect to any $\ell_\infty$-norm box uncertainty on the input.   
      
      \item In terms of evaluation time, computing the $\ell_{\infty}$-norm box bounds on the output is equivalent to two forward passes of the original implicit network (see Figure~\ref{fig:INN}).
      
      \item Implicit neural networks contain feedforward neural networks as a special case~\citep{LEG-FG-BT-AA-AYT:21}. Indeed, for a feedforward neural network with $k$ layers and $n$ neurons in each layer, there exists an implicit network representation with block upper diagonal weight matrix $A\in \real^{kn\times kn}$ and a vector $\eta\in \real^{kn}_{>0}$ such that $\mu_{\infty,[\eta]^{-1}}(A)<1$. In this case, the fixed-point of the embedded implicit network~\eqref{eq:INN-embedding} is unique, can be computed explicitly, and corresponds exactly to the approach taken in~\citep{SG-etal:18}.

  \end{enumerate}
\end{remark}

\begin{figure}
  \includegraphics[width=\linewidth]{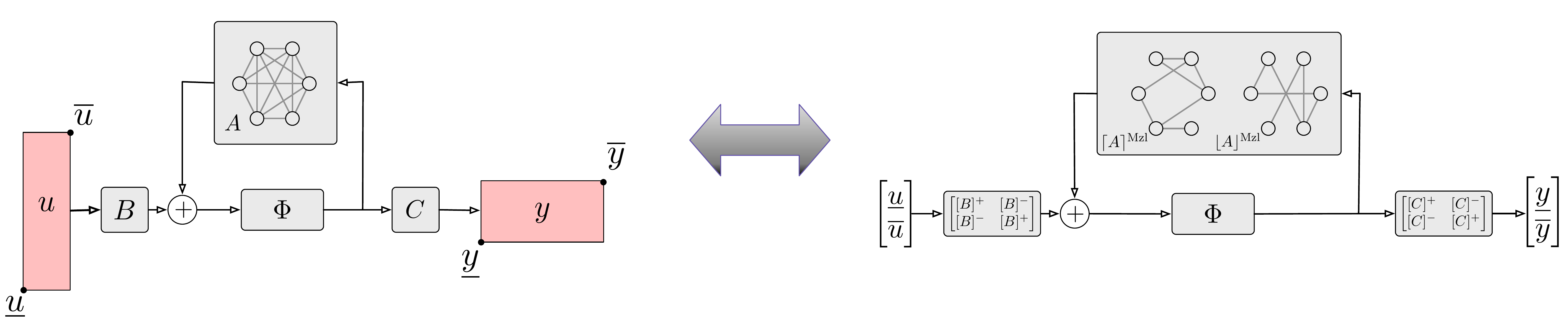}
  \vspace{-0.8cm}
  \caption{The original implicit neural network and its associated
    embedding network. The input-output behavior of
    the embedding system provides a box estimate for robustness of  the
    original  network.}\label{fig:INN}
\end{figure}

\paragraph*{Robustness verification via relative classifiers.}

The embedded network output $[\underline{y},\overline{y}]$ provides bounds on the elements of the initial implicit network's output, thus allowing for efficient  reachability analysis.  However, for classification problems, where the goal is to identify the maximum element of $y$, these boxes can lead to overly conservative estimates of robustness. In this section, we propose an
alternative approach to study classification problem by introducing a new classifier variable. Suppose the input $u\in\real^r$ leads to the output $y(u)\in\real^q$ and the correct label of  $u$ is $i \in \{ 1, \ldots, q\}$. 
We are interested to study the robustness of our classifier with respect to a perturbed set of inputs $[\underline{u}, \overline{u}]\ni u$.  For every $v\in [\underline{u},\overline{u}]$, we propose the \emph{relative classifier variable} $z^u(v)\in \real^{q-1}$ defined by
\begin{equation}\label{eq_classifier}
   z^{u}(v) := y(v)_i \vect{1}_{q-1} - y(v)_{-i},
\end{equation}
where $y(v)_{-i} = (y(v)_1,\ldots, y(v)_{i-1},y(v)_{i+1},\ldots,y(v)_{q})^{\top}\in \real^{q-1}$. Note that $z^u(v) > 0$ only when the perturbed input $v$ retains the correct label $i$, i.e., the perturbation does not have any effect on the classification. Using~\eqref{eq:INN}, we write~\eqref{eq_classifier} as 
\begin{equation}\label{eq_zuv}
   z^u(v) = T^u y(v)  = T^u C x^* + T^u c,
\end{equation}
where $x^*$ is the fixed-point of the implicit neural network~\eqref{eq:INN} with input $v$ and $T^u\in\{-1,0,1\}^{(q-1)\times{q}}$ is the linear transformation defined by \eqref{eq_classifier}.  Now, we construct  
\begin{equation}\label{eq:underline-z}
    \underline{z}^u  = [T^u C]^+ \underline{x}^* + [T^u C]^- \overline{x}^* + T^u c,
\end{equation}
where $\underline{x}^*, \overline{x}^*$ solves \eqref{eq:INN-embedding} with $\underline{u}, \overline{u}$ being the above perturbation bounds on the input.  
\begin{lemma}[Properties of the relative classifier variable]\label{lem:conservative}
Let $u\in [\underline{u},\overline{u}]$ be an input with the correct label $i\in \{1,\ldots,q\}$ and $\begin{bmatrix}\underline{y}\\\overline{y}\end{bmatrix}$ be the output of the embedded network~\eqref{eq:INN-embedding} with input $\begin{bmatrix}\underline{u}\\\overline{u}\end{bmatrix}$. Then,
\begin{enumerate}
    \item\label{p1} $\underline{z}^u>0$ implies that the every perturbed input $v\in [\underline{u},\overline{u}]$ is given the same label as $u$, that is, $y_i(v) > y_j(v)$ for all $j \neq i$ and every $v\in [\underline{u},\overline{u}]$;
    \item\label{p2} $\underline{y}_i-\max_{j\ne
  i}\overline{y}_j> 0$ implies that $\underline{z}^u> 0$. 
\end{enumerate}
\end{lemma}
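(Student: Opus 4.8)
The plan is to establish part~\ref{p1} from the box inclusion for fixed points in Theorem~\ref{thm:INN}\ref{p2:existenceN}, and part~\ref{p2} from the explicit formulas for the embedded output in~\eqref{eq:INN-embedding}, handling the rows of the relative classifier one at a time.

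For part~\ref{p1}, fix an arbitrary $v\in[\underline u,\overline u]$. By Theorem~\ref{thm:INN}\ref{p2:existenceN} (applicable since the hypotheses guarantee well-posedness of both networks), the unique fixed point $x^*_v$ of~\eqref{eq:INN} with input $v$ satisfies $\underline x^*\le x^*_v\le\overline x^*$. Next I would split $T^uC=[T^uC]^+ + [T^uC]^-$ and use that left-multiplication by the nonnegative matrix $[T^uC]^+$ preserves the componentwise order while left-multiplication by the nonpositive matrix $[T^uC]^-$ reverses it: $[T^uC]^+\underline x^*\le[T^uC]^+x^*_v$ and $[T^uC]^-\overline x^*\le[T^uC]^-x^*_v$. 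Adding these two inequalities and $T^uc$, and comparing with~\eqref{eq_zuv} and~\eqref{eq:underline-z}, yields $\underline z^u\le z^u(v)$ for every $v\in[\underline u,\overline u]$. Hence $\underline z^u>0$ forces $z^u(v)>0$, which by the definition~\eqref{eq_classifier} of the relative classifier variable is exactly the statement $y_i(v)>y_j(v)$ for all $j\ne i$; that is, every perturbed input in the box is assigned the label $i$.

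For part~\ref{p2}, the key observation is that, by nonnegativity of $[T^uC]^+$ and nonpositivity of $[T^uC]^-$, the $k$-th entry of $\underline z^u$ in~\eqref{eq:underline-z} equals the minimum over $\xi\in[\underline x^*,\overline x^*]$ of the affine functional $\xi\mapsto (T^uC)_{k,:}\,\xi+(T^uc)_k$. Writing the $k$-th row of $T^u$ as the one selecting indices $i$ and $j=j(k)$ for the appropriate $j\ne i$, this functional is $\xi\mapsto (C_{i,:}-C_{j,:})\xi+(c_i-c_j)$. Applying the elementary bound $\min(f-g)\ge\min f-\max g$ together with the embedded-output identities $\underline y=[C]^+\underline x^*+[C]^-\overline x^*+c$ and $\overline y=[C]^-\underline x^*+[C]^+\overline x^*+c$ read off from~\eqref{eq:INN-embedding}, one gets $\min_{\xi\in[\underline x^*,\overline x^*]}C_{i,:}\xi=\underline y_i-c_i$ and $\max_{\xi\in[\underline x^*,\overline x^*]}C_{j,:}\xi=\overline y_j-c_j$, so the $k$-th entry of $\underline z^u$ is at least $(\underline y_i-c_i)-(\overline y_j-c_j)+(c_i-c_j)=\underline y_i-\overline y_j\ge\underline y_i-\max_{j'\ne i}\overline y_{j'}>0$. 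Since $k$ was arbitrary, $\underline z^u>0$.

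The computations here are short; the one place requiring care is part~\ref{p2}, where one must resist the temptation to ``distribute'' the positive-part operation over the difference $C_{i,:}-C_{j,:}$ and rewrite $[T^uC]^\pm$ in terms of $[C]^\pm$. Passing instead to the exact minimum of the relative-classifier functional over the box $[\underline x^*,\overline x^*]$ (equivalently, invoking the entrywise sub-additivity $[a-b]^+\ge[a]^+-[b]^+$) is what makes the estimate close, and this is precisely the step that shows the relative-classifier certificate is never more conservative than the naive margin $\underline y_i-\max_{j\ne i}\overline y_j$.
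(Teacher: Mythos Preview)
Your proof is correct and follows essentially the same approach as the paper: part~\ref{p1} is identical, and for part~\ref{p2} both arguments establish the same entrywise inequality $(\underline z^u)_k\ge\underline y_i-\overline y_j$. The only cosmetic difference is that the paper works at the matrix level, expanding $[T^u]^+\underline y+[T^u]^-\overline y$ via the decomposition $T^uC=\big([T^u]^+[C]^++[T^u]^-[C]^-\big)+\big([T^u]^+[C]^-+[T^u]^-[C]^+\big)$ and then invoking $[M]^+\le P$, $[M]^-\ge N$ whenever $M=P+N$ with $P\ge 0$, $N\le 0$; your interval-arithmetic phrasing (each entry of $\underline z^u$ is the box minimum of the corresponding affine functional, followed by $\min(f-g)\ge\min f-\max g$) is the row-by-row reading of exactly this inequality and is arguably more transparent.
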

\begin{proof}  Choose $u$ with correct label $i$ and suppose $\underline{u} \leq \overline{u}$ so that $u \in [\underline{u}, \overline{u}]$. Additionally, let $\left[\begin{smallmatrix}\underline{x}^* \\ \overline{x}^*\end{smallmatrix}\right]$ and $\left[\begin{smallmatrix}\underline{y} \\ \overline{y}\end{smallmatrix}\right]$ be the state and output solutions to \eqref{eq:INN-embedding} for input $\left[\begin{smallmatrix}\underline{u} \\ \overline{u}\end{smallmatrix}\right]$.

  Regarding part~\ref{p1}, we observe that  $T^u C x \geq [T^u C]^+ \underline{x} + [T^u C]^- \overline{x}$ for any $x \in [\underline{x}, \overline{x}]$, and thus $z^u(v) \ge \underline{z}^u$ for any $v \in [\underline{u}, \overline{u}]$.  In this way, $\underline{z}^u > 0$ implies that $z^u(v) > 0$ for all $v \in [\underline{u}, \overline{u}]$, \emph{i.e.}, $v$ is given the label $i$ for all $v \in [\underline{u}, \overline{u}]$.
  
  Regarding part~\ref{p2}, suppose that $\underline{y}_i-\max_{j\ne
  i}\overline{y}_j> 0$. We note that $\underline{y}_i \vect{1}_{q-1} -\overline{y}_{-i} = [T^u]^+\underline{y} + [T^u]^-\overline{y}$ and therefore, we have  $[T^u]^+\underline{y} + [T^u]^-\overline{y}  \geq 0$. Now observe that
  \begin{equation}
    \begin{split}
        [T^u]^+\underline{y} + [T^u]^-\overline{y} &= [T^u]^+([C]^+\underline{x}^* + [C]^-\overline{x}^* + c) + [T^u]^-([C]^-\underline{x}^* + [C]^+\overline{x}^* + c) \\
        &= ([T^u]^+[C]^+ + [T^u]^-[C]^-) \underline{x}^* + ([T^u]^+[C]^- + [T^u]^-[C]^+)\overline{x}^* + T^u c. 
    \end{split}
  \end{equation}
  Now, note that 
  \begin{align*}
      T^u C & = ([T^u]^{+}+[T^u]^{-})([C]^{+}+[C]^{-}) \\ & = ([T^u]^+[C]^+ + [T^u]^-[C]^-) + ([T^u]^+[C]^- + [T^u]^-[C]^+),
  \end{align*}
  where $[T^u]^{+}[C]^{+}+[T^u]^{-}[C]^{-}\ge \vect{0}_{q-1\times n}$ and $[T^u]^+[C]^- + [T^u]^-[C]^+\le \vect{0}_{q-1\times n}$. On the other hand, we know that $T^u C = [T^u C]^{+} + [T^u C]^{-}$. This implies that 
  \begin{align*}
      [T^u C]^{+} &\le [T^u]^{+}[C]^{+}+[T^u]^{-}[C]^{-},\\
      [T^u C]^{-} &\ge [T^u]^+[C]^- + [T^u]^-[C]^+.
  \end{align*}
  Additionally, by Theorem~\ref{thm:INN}\ref{p4:existenceNE}, we know that $\underline{x}^*\le \overline{x}^*$. Thus, we get 
  \begin{align*}
       [T^u]^+\underline{y} + [T^u]^-\overline{y} & = ([T^u]^+[C]^+ + [T^u]^-[C]^-) \underline{x}^* + ([T^u]^+[C]^- + [T^u]^-[C]^+)\overline{x}^* + T^u c \\ & \le [T^u C]^+\underline{x}^* + [T^u C]^-\overline{x}^*+T^u c = \underline{z}^u.
  \end{align*}
  Thus, if $\underline{y}_i-\max_{j\ne
  i}\overline{y}_j> 0$ then $\underline{z}^u> 0$.  This completes the proof.
\end{proof}

Note that the converse of Lemma~\ref{lem:conservative}\ref{p2} need not hold in general. Indeed, Lemma~\ref{lem:conservative} shows that using $\underline{z}^u > 0$ for classification  leads to less conservative robustness certificates compared to using $\underline{y}_i-\max_{j\ne i}\overline{y}_j> 0$. 

\section{Theoretical and numerical comparisons}
In this section, we compare our robustness bounds with the existing
bounds in the literature. Before we proceed with the comparison, following~\citep{SG-etal:18,CP-EW-JZK:21}, we introduce the notion of
certified adversarial robustness which plays a crucial role in our
numerical comparison for classification problems. 
Given an implicit neural network~\eqref{eq:INN}, its 
certified adversarial robustness is its accuracy for
detection of the correct label. 
To this end, we consider a set of labeled test data $\mathcal{U}\subset \real^r$ and 
we define the \emph{deviation function}
$\delta:\real_{\ge 0}\times \mathcal{U}\to \real$ by 
\begin{align}
  \delta(\epsilon,u) =\max_{v\in\real^r} \; \setdef{y(v)_i-\max_{j\ne
  i}y(v)_j}{\|u-v\|_{\infty}\le \epsilon,\;\; i \;\;\mbox{ is the correct label of }u},
  \end{align}
  where $y(u)$ and $y(v)$ are the implicit neural network 
  outputs generated by inputs $u$ and $v$ respectively.
  We say that the network is \emph{certified adversarially robust} for radius $\epsilon$ at input $u$ if $\delta(\epsilon,u)>0$. 
  
Certifying adversarial robustness can be complicated due to
the non-convexity of the optimization problem on $v$ for the deviation
function. We briefly review the existing
methods for robustness verification of implicit neural networks and show how these methods provide upper bounds on the deviation function and thus a lower bound on the certified adversarial
robustness.

\paragraph*{Method 1: Lipschitz constants.}
For implicit neural network, the estimates on
the input-output Lipschitz constants are studied for deep equilibrium
networks in~\citep{EW-JZK:20,CP-EW-JZK:21,MR-RW-IRM:20}, for implicit
deep learning models in~\citep{LEG-FG-BT-AA-AYT:21}, and for
non-Euclidean monotone operator networks in~\citep{SJ-AD-AVP-FB:21f}. For an implicit neural network~\eqref{eq:INN} with $\ell_\infty$ input-output Lipschitz constant
$\Lip^{u\to y}_{\infty}\in \real_{\ge 0}$, the output can be bounded
as $\|y(u)-y(v)\|_{\infty}\le \Lip^{u\to
  y}_{\infty}\|u-v\|_{\infty}$. We define
$\supscr{\delta}{Lip}(\epsilon,u) :=
(y(u)_i-\max_{j\ne i}y(u)_j)-2(\Lip^{\infty}_{u\to y})\epsilon$. One can see that $\supscr{\delta}{Lip}(\epsilon,u)>0$ is a sufficient condition for certified adversarial robustness. 


\paragraph*{Method 2: Interval bound propagation.} 
In~\citep{SG-etal:18} a framework based on interval
bound propagation has been proposed for training robust feedforward neural networks. This method has recently been extended for training deep equilibrium networks in~\citep{anonymous:22}. Given an implicit neural
network~\eqref{eq:INN} with input perturbation
$\|u-v\|_{\infty}\le \epsilon$, we can adopt the approach in~\citep{SG-etal:18} to the implicit framework and propose the following fixed-point equation for estimating the output of the network: 
\begin{align}
    \begin{bmatrix}
      \underline{x}\\
      \overline{x}
    \end{bmatrix} &= \begin{bmatrix}
      \Phi([A]^{+}\underline{x}+[A]^{-}\overline{x}+[B]^+\underline{u}
      + [B]^{-}\overline{u} + b)\\
      \Phi([A]^{+}\overline{x}+[A]^{-}\underline{x}+[B]^+\overline{u}
      + [B]^-\underline{u} + b)
    \end{bmatrix} , \label{eq:IBP-1}\\
    \begin{bmatrix}
      \underline{y}\\
      \overline{y}
    \end{bmatrix} &= \begin{bmatrix}
      [C]^{+} & [C]^{-}\\
      [C]^{-} & [C]^{+}
    \end{bmatrix}\begin{bmatrix}
      \underline{x}\\
      \overline{x}
    \end{bmatrix} + \begin{bmatrix}
      c\\
      c
    \end{bmatrix}\label{eq:IBP-2}  ,     
\end{align}
where $\underline{u}=u-\epsilon\vect{1}_m$,
$\overline{u}=u+\epsilon\vect{1}_m$, and $(\underline{x},\overline{x})$ are the solutions of the fixed-point equation~\eqref{eq:IBP-1}. It is worth mentioning that the condition $\mu_{\infty,[\eta]^{-1}}(A) < 1$ proposed in Theorem~\ref{thm:INN} does not, in general, ensure well-posedness of the fixed-point equation~\eqref{eq:IBP-1}. The output of the neural
network then can be bounded by the box
$[\underline{y},\overline{y}]$. We define $\supscr{\delta}{IBP}(\epsilon,u) =\underline{y}_i - \max_{j\ne
  i} \overline{y}_i$. One can see that $\supscr{\delta}{IBP}(\epsilon,u)>0$ is a sufficient condition for certified adversarial robustness.  


\paragraph*{Method 3: Mixed monotone contractive approach.} Given an implicit neural
network~\eqref{eq:INN} with input perturbation
$\|u-v\|_{\infty}\le \epsilon$, we first use
Theorem~\ref{thm:INN} to obtain bounds on the output of the network. Indeed,
by Theorem~\ref{thm:INN}\ref{p3:NE}, the $\alpha$-average iteration~\eqref{eq:iterations-NE} with $\underline{u}=u-\epsilon\vect{1}_m$,
$\overline{u}=u+\epsilon\vect{1}_m$ converges to
$(\underline{x},\overline{x})$ and therefore, we have $y(v)\in
[\underline{y},\overline{y}]$. Moreover, we can define $\supscr{\delta}{MM}(\epsilon,u) =\underline{y}_i - \max_{j\ne
  i} \overline{y}_i$. One can see that $\supscr{\delta}{MM}(\epsilon,u)>0$ is a sufficient condition for certified adversarial robustness. Alternatively, we can use Theorem~\ref{thm:INN} with the output transformation~\eqref{eq:underline-z} to provide less conservative lower bounds on for certified adversarial robustness of the network. We define $\delta^{\mathrm{MM-C}}(\epsilon,u)=\min_{i\in\{1,\ldots,q-1\}}\underline{z}_i^u$, where $\underline{z}^u$ is as defined in equation~\eqref{eq:underline-z}. Then, by Lemma~\ref{lem:conservative}, one can obtain the tighter sufficient condition $\delta^{\mathrm{MM-C}}(\epsilon,u)>0$ for certified adversarial robustness.

  \subsection{A simple example}\label{sec_simple_example}

  In this section, we consider a simple $2$-dimensional implicit neural network to compare different approaches for robustness
  verification. Consider an implicit neural network~\eqref{eq:INN} with
  $A=\begin{bmatrix}-\frac{1}{4} & -\frac{1}{4} \\ \frac{3}{4} &
    -\frac{1}{4}\end{bmatrix}$, $B= \begin{bmatrix}\frac{1}{2} & 1 \\
    1 & \frac{1}{2} \end{bmatrix}$, $C=I_2$, $b=c=\vect{0}_{2}$,
  and $\phi_1(\cdot)=\phi_2(\cdot)=\mathrm{ReLU}(\cdot)$. Suppose that the nominal
  input is
  $u=\begin{bmatrix}  \frac{1}{4}\\
    \frac{3}{2}\end{bmatrix}$ and due to uncertainty,
  the input is in the box
  $v\in [\underline{u},\overline{u}]$, where
  $\underline{u}=\begin{bmatrix} 0\\ 1\end{bmatrix}$ and
  $\overline{u}=\begin{bmatrix}\frac{1}{3} \\ 2\end{bmatrix}$. We
  compare the robustness bounds obtained using the Lipschitz bound approach,
  the interval bound propagation method, and our mixed monotone contractive approach. Regarding the Lipschitz
  bound approach, we use the framework in ~\cite[Corollary
  5]{SJ-AD-AVP-FB:21f} to estimate the input-output Lipschitz constant
  of the networks and thus we get $\|y(u)-y(v)\|_{\infty} \le
    \frac{\|B\|_{\infty}\|C\|_{\infty}}{1-\mu_{\infty}(A)^+}\|u-v\|_{\infty}
    = 3\|u-v\|_{\infty}$. Regarding the interval bound propagation method, using the
  iterations in~\eqref{eq:IBP-1}, we obtain $y(v) \in \left[\begin{pmatrix}  0.0342\\
      0\end{pmatrix}, \begin{pmatrix} 1.7265 \\
      2.1026\end{pmatrix}\right] $. Finally, regarding the
  mixed monotone contractive approach, using the $\alpha$-average iteration~\eqref{eq:iterations-NE} in
  Theorem~\ref{thm:INN}\ref{p4:existenceNE}, we get $y(v)\in \left[\begin{pmatrix} 0.3939 \\
      0.6364\end{pmatrix}, \begin{pmatrix}  1.6061 \\
      2.0303\end{pmatrix}\right]$. Figure~\ref{fig:compare} compares the robustness certificates
  obtained using these different approaches.

\begin{SCfigure}[25][htp]
\vspace{-.11cm}
\input{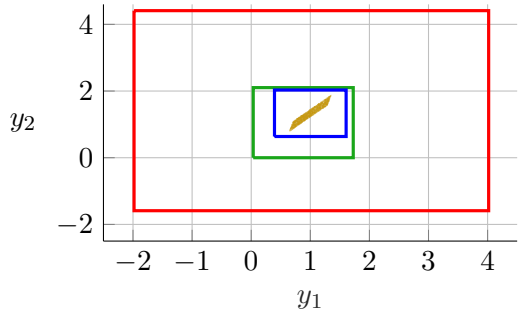}\hspace{.75cm}
\caption{Problem Setting of Section \ref{sec_simple_example}: Comparing the application of Theorem \ref{thm:INN} to existing verification methods for implicit neural networks. The yellow parallelogram shows different value of $y(u)$ for 1000 iid uniformly randomly selected $u=(u_1,u_2)^{\top}$ satisfying $0\le u_1\le \frac{1}{3}$ and $1\le u_2\le 2$. Robustness certificates attained from the Lipschitz bound approach, the interval bound propagation approach, and the application of Theorem \ref{thm:INN} are shown as red, green, and blue boxes, respectively.} \label{fig:compare}
\end{SCfigure}

\begin{figure}[!ht]
	\begin{tabular}{cc}
		\includegraphics[width = 0.482\linewidth,clip]{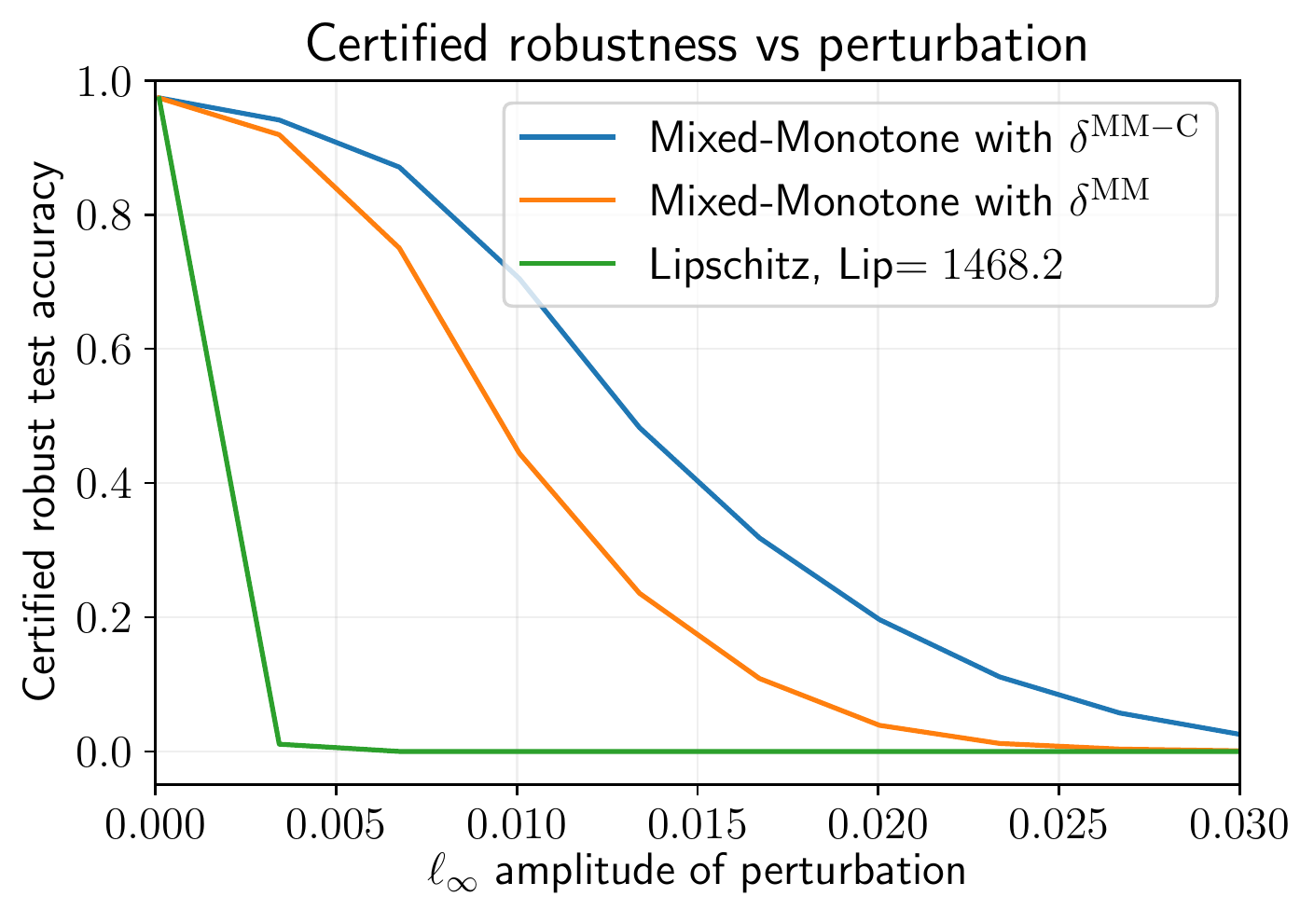}&
		\includegraphics[width = 0.482\linewidth,clip]{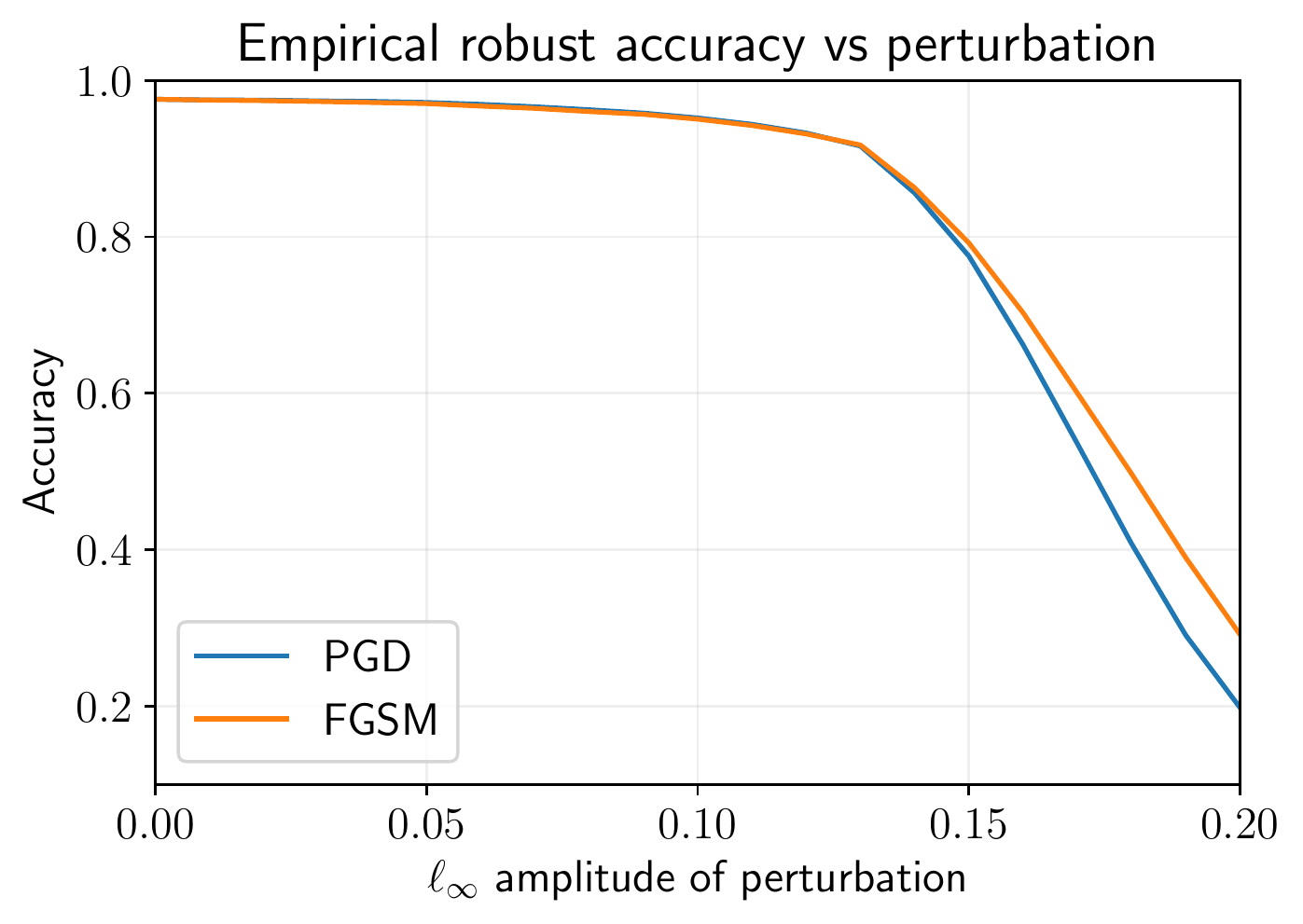}
	\end{tabular}
\vspace{-.35cm}
    \caption{On the left is a plot of the certified adversarial robustness of the trained NEMON model using a Lipschitz method and two mixed monotonicity methods. For fixed $\epsilon$, the fraction of test inputs which are certified robust are plotted. On the right is a plot of the empirical robustness of the same NEMON model subject to PGD and FGSM attacks. Note the difference in scale on the horizontal axis. }
    \label{fig:NEMON-MNIST}
\end{figure}
      \subsection{MNIST experiment}
      In this section, we compare the certified adversarial robustness
      of different approaches on the MNIST handwritten digit
      dataset, a dataset of $70000$ $28 \times 28$ pixel images, $60000$ of which are for training, and $10000$ for testing. Pixel values are normalized in $[0,1]$. We trained a fully-connected NEMON model, introduced in~\citep{SJ-AD-AVP-FB:21f}, with $n=100$ neurons as in the training problem~\eqref{eq:TrainingProblem}. For well-posedness, we imposed $\mu_{\infty,[\eta]^{-1}}(A) \leq 0$, where we directly parametrize the set of such $A$ as $A = [\eta]^{-1}T[\eta] - \diag(|T|\vectorones[n])$ for unconstrained $T$~\citep[Lemma 9]{SJ-AD-AVP-FB:21f}. We also use the estimate in~\citep[Corollary 5]{SJ-AD-AVP-FB:21f} for the input-output Lipschitz bound of the NEMON model. Training data was broken up into batches of $100$ and the model was trained for $15$ epochs with a learning rate of $10^{-3}$. After training, the model was validated on test data using the sufficient conditions for certified adversarial robustness in the previous section. 
      For fixed $\epsilon$ and the $10000$ test images, over $10$ trials, it took, on average, $2.250$ seconds to compute $\supscr{\delta}{Lip}(\epsilon,u)$,
      $218.099$ seconds to compute $\supscr{\delta}{IBP}(\epsilon,u)$,
      $9.087$ seconds to compute $\supscr{\delta}{MM}(\epsilon,u)$, and $11.291$ seconds to compute $\supscr{\delta}{MM-C}(\epsilon,u)$.
      To provide a conservative upper-bound on the certified adversarial robustness and to observe empirical robustness, the model was additionally attacked using projected gradient descent (PGD) and fast-gradient sign method (FGSM) attacks. Results from these experiments are shown in Figure~\ref{fig:NEMON-MNIST}. 
      
      \paragraph*{Summary evaluation.} We draw several conclusions from the experiments. First, the bounds on the certified adversarial robustness provided from the interval-bound propagation are not plotted since they provided a trivial lower bound of zero adversarial robustness for every $\epsilon$ tested. Second, we see that the bounds on the certified adversarial robustness provided by the mixed monotonicity approaches are tighter than the bounds provided by the Lipschitz constant. Third, we note the additional tightness in the bounds provided by computing the relative classifier variable $\underline{z}^u$. Finally, we observe that although mixed monotonicity approaches provide better bounds than the better-known Lipschitz and interval-bound propagation approaches, the gap between the certified robustness and the empirical robustness remains sizable, especially for larger $\ell_\infty$-perturbations.

      

\section{Conclusions}
Using mixed monotone systems theory and contraction theory, we developed a framework for studying robustness of implicit neural networks. A key tool in this approach is an embedded network that provides $\ell_{\infty}$-norm box estimates for input-output behavior of the given implicit neural network. Empirical evidence shows our approach outperforms existing methods. Future work includes (i) applying the mixed monotone contractive approach to other implicit neural networks such as MON~\citep{EW-JZK:20} and LBEN~\citep{MR-RW-IRM:20} and (ii) designing appropriate state transformations~\citep{MA-SC:21} to improve the input-output bounds in Theorem~\ref{thm:INN}.



\bibliography{alias,Main,FB,New,SJ}

\appendix

\end{document}